\newtheorem{theorem}{Theorem}
\newtheorem{lemma}[theorem]{Lemma}
\title{MimicDiffusion: Purifying Adversarial Perturbation via Mimicking Clean Diffusion Model
}
\author{
  Kaiyu Song, Hanjiang Lai \\
  Sun Yat-Sen University \\
  \texttt{\{songky7, laihanj3\}@mail.sysu.edu.cn}
}
\begin{document}
\maketitle

\begin{abstract}
Deep neural networks (DNNs) are vulnerable to adversarial perturbation, where an imperceptible perturbation is added to the image that can fool the DNNs. Diffusion-based adversarial purification focuses on using the diffusion model to generate a clean image against such adversarial attacks. Unfortunately, the generative process of the diffusion model is also inevitably affected by adversarial perturbation since the diffusion model is also a deep network where its input has adversarial perturbation. In this work, we propose MimicDiffusion, a new diffusion-based adversarial purification technique, that directly approximates the generative process of the diffusion model with the clean image as input. Concretely, we analyze the differences between the guided terms using the clean image and the adversarial sample. After that, we first implement MimicDiffusion based on Manhattan distance. Then, we propose two guidance to purify the adversarial perturbation and approximate the clean diffusion model. 
Extensive experiments on three image datasets including CIFAR-10, CIFAR-100, and ImageNet with three classifier backbones including WideResNet-70-16, WideResNet-28-10, and ResNet50 demonstrate that MimicDiffusion significantly performs better than the state-of-the-art baselines. On CIFAR-10, CIFAR-100, and ImageNet, it achieves 92.67\%, 61.35\%, and 61.53\% average robust accuracy, which are 18.49\%, 13.23\%, and 17.64\% higher, respectively. The code is available in the supplementary material.
\end{abstract}    
\section{Introduction}
Deep neural networks (DNNs) have achieved great success in various fields of computer vision, e.g., image detection~\cite{diffuse}, image classification~\cite{advertrain:diffusion}. However, DNNs are vulnerable to the \textit{adversarial samples}~\cite{adattack:fgsd}, where the adversarial sample consists of the clean sample and an imperceptible adversarial perturbation.  

To defend against adversarial attack, \textit{adversarial training}~\cite{adtrain:AAAED,adtrain:IRGD} has been proposed by leveraging the generated adversarial samples to train the classifier. For example, Bai \textit{et al.}~\cite{baseline:bai2021} used the adversarial samples as the training data to train the classifier directly. However, adversarial training may be ineffective when suffering from unknown attack methods~\cite{advtrain_disadvantage}.

In contrast, another popular method for the adversarial attack is \textit{adversarial purification}~\cite{adpuri:baseline1,adpuri:baseline2,adpuri:baseline3}. Given an adversarial sample as the input, adversarial purification methods aim to purify the adversarial perturbation from the adversarial sample and finally obtain the generated clean sample. Then the generated clean sample is fed into the classifier.

As one of the popular generative models, the diffusion model~\cite{DDPM} becomes a potential tool for adversarial purification due to generating high-quality images. Previous methods~\cite{adpuri:baseline3,adpuri:baseline2} depended on finding an optimal time step in the forward process to cover the adversarial perturbation. Then, the reverse process tries to purify both the Gaussian noise and adversarial perturbation and keep the label semantic at the same time. Yong \textit{et al.}~\cite{adpuri:baseline1} proposed the score-based method by using the property that the clear sample tends to be the lower value of the score function.  Nie \textit{et al.} ~\cite{adpuri:baseline3} proposed the DiffPure based on a small-step denoiser. Further, there are some improved methods based on the iteration methods and the guided methods~\cite{GDPM}. For example, Wang \textit{et al.}~\cite{GDPM} alleviated the requirement for keeping the label semantic by incorporating a guidance~\cite{DPS}.

Despite the success of diffusion-based adversarial purification methods, we argue that adversarial perturbations added to the clean samples will still affect the generative process of the diffusion model, which deviates from the trajectory of the clean diffusion model. Thus, it will generate extra noise in the synthetic images and cause performance degradation (More details can be found in Section~\ref{Preliminary}). The key is to remove the effects of the adversarial perturbation when performing the generative process.

As shown in Fig.~\ref{fig:introduction}, an intuitive approach is that if the input is not an adversarial sample but a clean image, the adversarial perturbation problem would disappear. Therefore, an interesting idea arises: \textit{without knowing the clean inputs, can we mimic the trajectory of the diffusion model with clean inputs to reduce the effect of adversarial perturbations?} 

\begin{figure}
    \centering
    \includegraphics[width=0.5\textwidth]{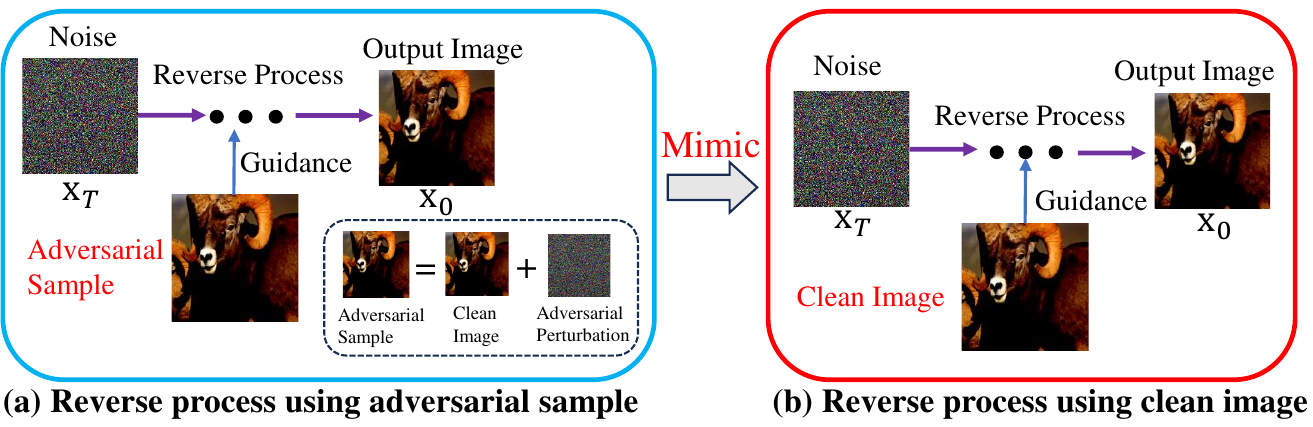}
    \caption{An illustration of MimicDiffusion. By implementing the guidance method and using the adversarial sample (clean image + adversarial perturbation), we aim to alleviate the influence of the adversarial perturbation for the reverse process of the diffusion model such that it is similar to the reverse process with the input of the clean image.} 
    \label{fig:introduction}
\end{figure}

In this work, we propose a novel MimicDiffusion to reduce the negative influence of adversarial perturbations. We use guided diffusion as a backbone, where the Gaussian noise is used as input and the adversarial sample that is composed of clean sample and adversarial perturbation is used as guidance. The main problem is that the guided term also includes adversarial perturbation. Fortunately, adversarial perturbations are very small values. Under this assumption, we first use Manhattan distance ($\ell_1$ distance) instead of Euclidean distance ($\ell_2$ distance). Thus we reduce the range of the derivative to +1 or -1. We further show that using Manhattan distance can be divided into two cases, short range, and long range, to compare the difference in derivatives of the adversarial sample and the clean sample. Concretely, 1) when the Manhattan distance between the generated image and the clean image is larger than the maximum value in adversarial perturbations, called the \textit{long-range distance}, we show that the gradients of the two guidance are the same. That is what we want. 2) When the Manhattan distance between the generated image and the clean image is smaller than the maximum value in adversarial perturbations, called the \textit{short-range distance}, the gradients of the two guidance may or may not be equal. 

According to the above observations, we propose two guidance: one for the long-range distance and another for the short-range distance. In the long-range guidance, we can simply use the adversarial sample as the guidance since the gradients are the same. In the short-range guidance, we propose a non-linear transform operation inspired by super-resolution measurement~\cite{DPS}. This method involves projecting the generated image and the adversarial image onto a higher dimensional space, effectively increasing the Manhattan distance between them beyond the maximum value of adversarial perturbations. Hence, it may be going to be the same as the case of the long-range distance. 

Ultimately, we compare our method to the latest adversarial training and adversarial purification methods on various strong adaptive attack benchmarks. Extensive experiments on CIFAR-10, CIFAR-100, and ImageNet across various classifiers such as WideResNet-28-10 and WideResNet-70-16 show that MimicDiffusion achieves state-of-the-art performance. Compared with the latest adversarial purification methods~\cite{adpuri:baseline1}, e.g., AutoAttck ($\ell_{\infty}, \epsilon=8/255$)~\cite{autoattack}, we show the absolute improvement of $+18.49\%$, $+13.23\%$ and $17.64\%$ in average robust accuracy on CIFAR-10, CIFAR-100 and ImageNet with WideResNet-28-10 respectively. 

 To sum up, the main contributions of this paper are:
 \begin{itemize}
    \item We propose a new perspective for diffusion-based adversarial purification methods, which mimic the generative process of the diffusion model with the clean image as input to reduce the negative influence of adversarial perturbation.
     \item We propose a novel MimicDiffusion, where we use Manhattan distance and propose long-range guidance and short-range guidance to bridge the gap between the clean sample and the adversarial sample.
     \item The experimental results show that our model achieves state-of-the-art performance on various adaptive attack benchmarks.
 \end{itemize}
\section{Related Work}
\textbf{Adversarial training} uses the adversarial samples to train the classifier~\cite{relatedwork:firstadversarialtrain}. Thus, the classifier can also correctly recognize the adversarial samples. The proposed method~\cite{relatedwork:ODE-lyapunov-stable} uses ordinary differential equations to re-sample the feature point from the Lyapunov-stable equilibrium points. The data augmentation method \cite{relatedworkDataAug} was proposed to generate lots of adversarial samples and improve the robust prior knowledge.

\textbf{Adversarial purification} uses the generative models to generate clean images and could be regarded as the classifier-agnostic method. These methods try to directly remove the adversarial perturbation in the adversarial sample. Thus, the classifier can be fixed. Pouya \textit{et al.}~\cite{relatework:puri-gan} used the generative adversarial network (GANs) to purify the perturbation. Meanwhile, the score-based match model \cite{relatework:sde,relatework:scorebased} was proposed to eliminate the influence of the perturbation and recover a clean image based on the score-based match network. We empirically compare our method with the previous works and the experimental results show that our method can obtain significant improvements.

\textbf{Diffusion model based adversarial purification}. Recently, Nie \textit{et al.} \cite{adpuri:baseline3} proposed the diffusion-based adversarial purification method, which proved that generating images from the diffusion model will tend to be the clean image. Therefore, one entire step of the diffusion process could purify the adversarial perturbation. However, limited by the different designs in different types of diffusion models \cite{Karras2022edm}, it is difficult to reach the optimal performance under adversarial purification. To alleviate this, GDPM~\cite{GDPM} proposed an iteration-based method and incorporated the guided method~\cite{DPS} to keep the label semantic under the iteration process. Proven by Lee \textit{et al.}~\cite{RobustEvaluation}, GDPM and Nie \textit{et al.} rely on finding an optimal hyperparameter setting, e.g. the optimal time step. Eventually, adversarial perturbation still has a negative influence on the adversarial purification methods.

\section{Preliminary} 
\label{Preliminary}

 We first give some definitions. In the adversarial purification, we have a diffusion model, e.g., the score function $s_{\theta}(x_t) = \nabla_{x} \log p(x;t)$, that is trained on the original dataset. Now given the adversarial sample denoted as $x^{adv}$, where $x^{adv} = x^{ori} + \phi$ and $x^{ori}$ is the clean image (unknown) and $\phi$ is the adversarial perturbation (unknown) generated by adversarial attack method, adversarial purification aims to recover the clean image $x^{ori}$ from the input adversarial sample $x^{adv}$.

\textbf{Diffusion model based adversarial purification}. The idea is to remove both the adversarial purification and Gaussian noise in the reverse process of the diffusion model. 
First, it finds a time step $t^\ast$ such that:
\begin{equation}
\begin{split}
    x_{t^\ast} & = \sqrt{\sigma(t^\ast)}x^{adv} + \sqrt{(1-\sigma(t^\ast))} \epsilon \\
    &= \sqrt{\sigma(t^\ast)}(x^{ori} + \phi) + \sqrt{(1-\sigma(t^\ast))} \epsilon,
\end{split}
\label{eq:purification_adding_noise}
\end{equation}
where it is the forward process of diffusion model~\cite{DDPM}, $x_{t^\ast}$ is the state in the $t^\ast$ time, $\sigma(\ast)$ is the noise schedule related to the time step $t$, and $\epsilon \sim \mathcal{N}(0,1)$ is the Gaussian noise. Then, the backward process of diffusion model~\cite{DDPM} is performed on $x_{t^\ast}$ to generate the clean image $\hat{x}^{ori}$.

In the above approaches, the key to success is finding an optimal $t^{\ast}$~\cite{adpuri:baseline3}, and thus the performance is sensitive to the value of $t^{\ast}$.

\textbf{Guided-diffusion based adversarial purification}. One of the state-of-the-art methods is the guided diffusion model~\cite{GDPM}. 
In the guided diffusion methods, the adversarial sample $x^{adv}$ is used as guidance, and it starts from the pure Gaussian noise $x_T$ in the backward process. In the $t$ time step, the guided generating process is formulated as:
\begin{equation}
    \nabla_{x} \log p(x_t|x^{adv};t) = \underbrace{\nabla_{x} \log p(x_t;t)}_{\text{Score Function}} +  \underbrace{\nabla_{x} \log p(x^{adv}|x_t;t)}_{\text{Guidance Term}}.
    \label{eq:guidedmethoddefination}
\end{equation}
The score function is already known and the guidance term can be approximated as~\cite{DPS}:
\begin{equation}
\begin{split}
    \nabla_{x} \log p(x^{adv}|x_t;t) &= -R_{t}\nabla_{x_{t}}d(\hat{x}_{t},x^{adv}), \\
    \hat{x}_{t} &= \frac{x_{t}-\sqrt{1-\sigma(t)} s_{\theta}(x_{t})}{\sqrt{\sigma(t)}},
\end{split}
    \label{eq:DPS guided function}
\end{equation}
where $s_{\theta}(x_{t})$ is the known score function~\cite{relatework:sde} with the parameter $\theta$ for $x_{t}$ in the $t$ time, $\hat{x}_{t}$ is the estimation for $x_{0}$ in the $t$ time, $R_{t}$ is the guided factor related to the $t$ time, and $d(\ast,\ast)$ is the $\ell_2$ norm distance metric.

\textbf{Motivation}. However, we argue that the trajectory of the guided method will still be influenced by the adversarial perturbation. 
To show this, considering the Eq.~\ref{eq:DPS guided function} with the $\ell_2$ distance norm, we have
\begin{equation}
\begin{split}
    -R_{t}\nabla_{x_{t}}d(\hat{x}_{t},x^{adv}) &= -R_t\nabla_{x_{t}}||\hat{x}_{t} - x^{adv}||_2^2 \\
            &= -R_t\nabla_{x_{t}}||\hat{x}_{t} - x^{ori} - \phi||_2^2 \\
            &= -R_t\frac{\partial||\hat{x}_{t} - x^{ori} - \phi||_2^2}{\partial \hat{x}_{t}}\frac{\partial\hat{x}_{t}}{\partial x_{t}}.
\end{split}
\label{eq:l2norm}
\end{equation}
Then, the Jacobi matrix for the partial part is:
\begin{equation}
\begin{split}
    \frac{\partial||\hat{x}_{t} - x^{ori} - \phi||_2^2}{\partial \hat{x}_{t}} &= \nabla_{x_{t}}J((\hat{x}_{t} - x^{ori} - \phi)^2)\\
    &= 2 J(\hat{x}_{t} - x^{ori} - \textcolor{red}{\phi}),
\end{split}
\label{eq:reason}
\end{equation}
where $J(\ast)$ is the operation to calculate the Jacobi matrix. From Eq.~\ref{eq:reason}, we can see that the gradient of the guidance term also includes the adversarial perturbation $\phi$. Hence, the adversarial perturbation still affects the generative process of the guided diffusion model, which would cause the generated trajectory to deviate from the correct direction.

Suppose that we can replace the adversarial input to the clean sample $x^{ori}$, according to Eq.~\ref{eq:l2norm} - Eq.~\ref{eq:reason}, we have
\begin{equation}
R_{t}\nabla_{x_{t}}d(\hat{x}_{t},x^{ori}) \propto 2 R_{t} J((\hat{x}_{t} - x^{ori})\frac{\partial\hat{x}_{t}}{\partial x_{t}}).
    \label{eq:ori}
\end{equation}
Hence, if we can mimic the diffusion model with clean images as inputs, we can remove the negative influence of the adversarial perturbation.

\begin{figure*}[h!]
    \centering
    \includegraphics[width=\textwidth]{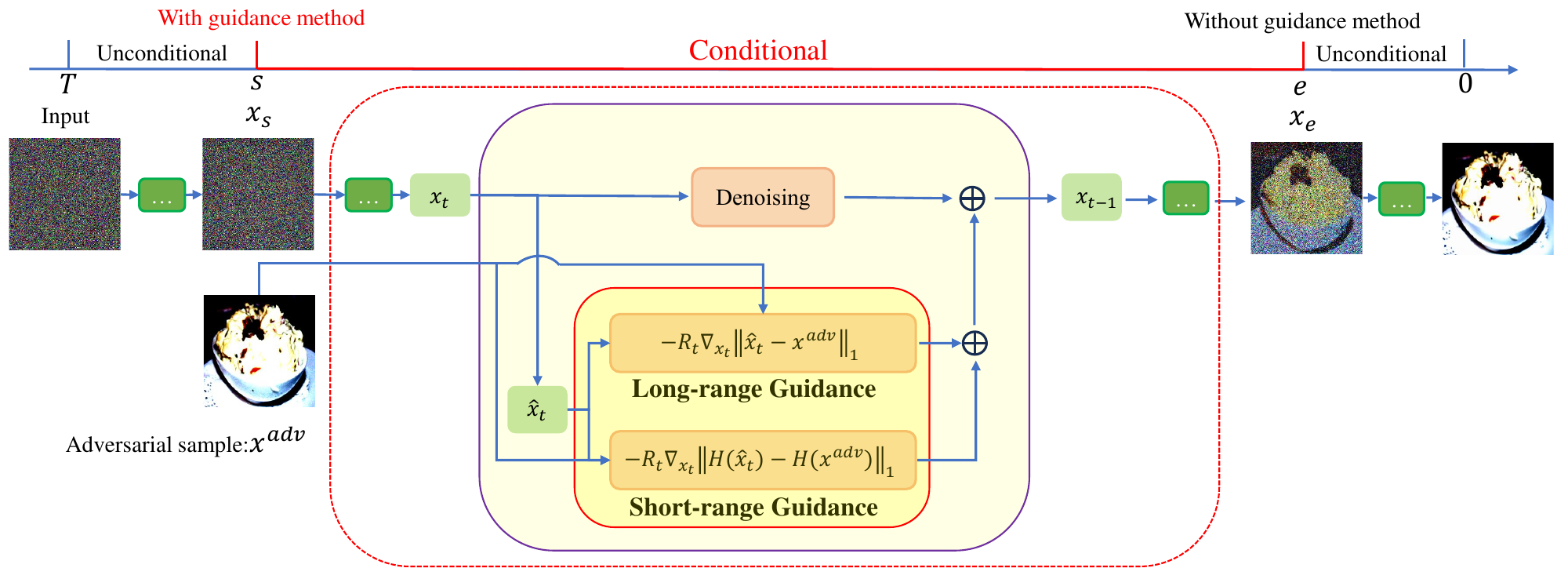}
    \caption{An overview of the proposed MimicDiffusion, where $x_{T}$ is the pure Gaussian noise, $[s,e]$ is the interval to implement the guidance method noted as conditional by using $x^{adv}$ as the measurement, the other time step without the guidance method noted as unconditional, denoise is one step reverse process, long-range guidance is used to eliminate the adversarial perturbation in long-range condition, and short-range guidance is used to alleviate the adversarial perturbation in short-range condition.} 
    \label{fig:MGDM}
\end{figure*}

\section{Method} 
In this paper, we aim to remove the effect of the adversarial perturbation on the guided diffusion model. According to our motivation, we want to approximate the gradients of the guided terms with the adversarial sample and clean sample as inputs:

\begin{equation} 
\nabla_{x_{t}}d(\hat{x}_{t},x^{adv}) \approx \nabla_{x_{t}}d(\hat{x}_{t},x^{ori}).
\end{equation}

\textbf{Manhattan distance}. For Euclidean distance, it is difficult to approximate the derivatives for different inputs. Fortunately, a common distance metric, i.e., Manhattan distance, might have the same gradients for different inputs. We denote $||x||_{\min} = \min(|x_1|,|x_2|,\cdots,|x_n|)$, where $n$ is the number of values in $x$.

\begin{lemma}
\label{lemma1}
Let $||\phi||_{\infty} < \xi$ and $x^{adv} = x^{ori} + \phi$, then we have the following relations for any $x_t$: 1) when $||x_{t}-x^{ori}||_{\min} > \xi$, we have $\nabla_{x_{t}}||x_{t}-x^{adv}||_1 = \nabla_{x_{t}}||x_{t}-x^{ori}||_1$; 2) when $||x_{t}-x^{ori}||_{\min} \leq \xi$, we have $\nabla_{x_{t}}||x_{t}-x^{adv}||_1 \overset{\text{Unknown}}{\longleftrightarrow} \nabla_{x_{t}}||x_{t}-x^{ori}||_1$.
\end{lemma}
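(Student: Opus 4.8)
The plan is to exploit the coordinate-wise separability of the $\ell_1$ norm. Writing $\|x_t - x^{adv}\|_1 = \sum_{i=1}^{n} |x_{t,i} - x^{adv}_i|$, the gradient decouples across coordinates, so it suffices to fix a single coordinate $i$ and compare $\partial_{x_{t,i}} |x_{t,i} - x^{adv}_i|$ with $\partial_{x_{t,i}} |x_{t,i} - x^{ori}_i|$. Away from the kink of the absolute value, each of these equals the corresponding sign, i.e. $\mathrm{sign}(x_{t,i} - x^{adv}_i)$ and $\mathrm{sign}(x_{t,i} - x^{ori}_i)$. Hence the entire statement reduces to comparing these two signs coordinate by coordinate, and I would carry out the proof at this level.

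For part 1, I would set $u_i := x_{t,i} - x^{ori}_i$ and invoke the hypothesis $\|x_t - x^{ori}\|_{\min} > \xi$, which forces $|u_i| > \xi$ for \emph{every} coordinate $i$. Since $\|\phi\|_{\infty} < \xi$ yields $-\xi < \phi_i < \xi$, subtracting $\phi_i$ cannot flip the sign of $u_i$: if $u_i > \xi$ then $\phi_i < \xi < u_i$, so $u_i - \phi_i > 0$; and if $u_i < -\xi$ then $\phi_i > -\xi > u_i$, so $u_i - \phi_i < 0$. Because $x_{t,i} - x^{adv}_i = u_i - \phi_i$, this gives $\mathrm{sign}(x_{t,i} - x^{adv}_i) = \mathrm{sign}(u_i) = \mathrm{sign}(x_{t,i} - x^{ori}_i)$ for all $i$, and summing over coordinates shows the two gradients coincide, which is exactly the claimed equality.

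For part 2, the hypothesis $\|x_t - x^{ori}\|_{\min} \leq \xi$ only guarantees that \emph{some} coordinate $j$ satisfies $|u_j| \leq \xi$. For such a coordinate, $u_j$ and $u_j - \phi_j$ may lie on opposite sides of $0$ for an admissible perturbation (for instance $0 < u_j \leq \xi$ with $\phi_j$ close to $\xi$), while for other admissible $\phi_j$ they agree. Since $\phi$ is unknown, no definite relation can be asserted; I would make this precise by exhibiting one configuration in which the signs flip and one in which they match, establishing the indeterminate case.

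The main obstacle, and where I would spend the most care, is the boundary behavior at the kink of $|\cdot|$. In part 1 I must confirm that the strict inequalities $|u_i| > \xi$ and $|\phi_i| < \xi$ keep both $x_{t,i} - x^{ori}_i$ and $x_{t,i} - x^{adv}_i$ bounded away from $0$, so that the ordinary gradient (rather than a subgradient) is genuinely well defined and the sign computation is legitimate. Conversely, the entire difficulty of part 2 is that one coordinate can sit inside the interval $[-\xi, \xi]$ straddling the kink, which is precisely what destroys the guarantee; articulating this contrast cleanly is the crux of the argument.
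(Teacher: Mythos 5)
Your proposal is correct and follows essentially the same route as the paper's own proof: both reduce the claim to comparing $\mathrm{Sign}(x_t - x^{ori} - \phi)$ with $\mathrm{Sign}(x_t - x^{ori})$, using $\|\phi\|_\infty < \xi$ to show the sign cannot flip in the long-range case and noting the indeterminacy in the short-range case. Your version is simply a more careful rendering of the same argument --- making the coordinate-wise decomposition explicit and flagging that the strict inequalities keep each coordinate away from the kink of $|\cdot|$, a differentiability point the paper glosses over.
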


\begin{proof}
    The proof is simple based on the derivative of $\ell_1$ can only be +1 or -1. We have:
\begin{equation}
\begin{split}
    \nabla_{x_{t}}||x_{t}-x^{adv}||_1 &= \nabla_{x_{t}}||x_{t}-x^{ori}-\phi||_1 \\
    &= \text{Sign}(x_{t}-x^{ori}-\phi),
\end{split}
\end{equation}
where $\text{Sign}(x) = 1$ if $x>0$ otherwise $\text{Sign}(x) = -1$. 
When $||x_{t}-x^{ori}||_{\min} > \xi$, we have $(x_{t}-x^{ori}) > \xi$ or $(-x_{t}+x^{ori}) > \xi$. Both the two cases, it is easy to verify that 
\begin{equation}
\begin{split}
   \text{Sign}(x_{t}-x^{ori}-\phi) &= \text{Sign}(x_{t}-x^{ori}).
\end{split}
\label{eq:absolutin}
\end{equation}
Hence, we have $\nabla_{x_{t}}||x_{t}-x^{adv}||_1 = \text{Sign}(x_{t}-x^{ori}-\phi) = \text{Sign}(x_{t}-x^{ori}) = \nabla_{x_{t}}||x_{t}-x^{ori}||_1$.

When $||x_{t}-x^{ori}||_{\min} < \xi$, the distance between $x_{t}$ and $x^{ori}$ is too close, and we cannot clarify the relation between $\nabla_{x_{t}}||x_{t}-x^{adv}||$ and $\nabla_{x_{t}}||x_{t}-x^{ori}||$.
\end{proof} 

Lemma~\ref{lemma1} shows two interesting observations. When $t$ is large and $x_{t}$ will tend to Gaussian noise, thus it satisfies $ ||x_{t}-x^{ori}||_{\min} > \xi$, and the gradients for the absolute value of $|x_{t} - x^{adv}|$ and $|x_{t} - x^{ori}|$ are equal. On the contrary, when $t$ is small and the distance between $x_{t}$ and $x^{adv}$ is too close, i.e., $||x_{t}-x^{ori}||_{\min} \leq \xi$, we cannot clarify the relation between $\nabla_{x_{t}}||x_{t}-x^{adv}||_1$ and $\nabla_{x_{t}}||x_{t}-x^{ori}||_1$.

Based on the two observations, we further propose MimicDiffusion shown in Fig.~\ref{fig:MGDM} to achieve mimicking. Concretely, we directly start from the Gaussian noise, which can avoid adding additional perturbation from the adversarial sample. The adversarial sample is only used as the guidance. The Manhattan distance is proposed in the guidance term to reduce the negative influence of adversarial perturbation. When the generated $x_t$ is far away from the $x^{ori}$, we call it long-range distance, we can simply use the adversarial sample as the guidance. Then, we propose to use a super-resolution operation to reduce the extra perturbation when $x_t$ is close to $x^{ori}$ called short-range distance. In the end, to better mimic the trajectory of the guidance with the $x^{ori}$ input, we further propose a novel sampling strategy to implement guidance in a particular time interval to reduce the extra noise and time cost at the same time.

\textbf{Long-range guidance}. In the long-range distance situation, e.g., $||\hat{x}_{t}-x^{ori}||_{\min} > \xi$, based on Lemma~\ref{lemma1}, we apply the Manhattan distance:
\begin{equation}
\begin{split}
    \frac{\partial||\hat{x}_{t} - x^{ori} - \phi||_1}{\partial \hat{x}_{t}} &= \nabla_{x_{t}}J(|\hat{x}_{t} - x^{adv}|)\\
    & = \nabla_{x_{t}}J(|\hat{x}_{t} - x^{ori}|) \\
     & = \frac{\partial||\hat{x}_{t} - x^{ori} ||_1}{\partial \hat{x}_{t}},
\end{split}
\label{eq:change}
\end{equation}
In this way, we have $\nabla_{x_{t}}d(\hat{x}_{t},x^{adv}) = \nabla_{x_{t}}d(\hat{x}_{t},x^{ori})$ that can eliminate $\phi$, and thus avoid adding extra adversarial perturbation from the guidance term. 

Hence, in the long-range distance, we can simply set the guidance $y^{l} = x^{adv}$, where the trajectory will be similar to the diffusion model with clear input. The long-range guidance can be formulated as:
\begin{equation}
    \nabla_{x_t} \log p(y^{l}|x_t) = -R_t\nabla_{x_{t}} ||\hat{x}_{t} - x^{adv}||_1,
    \label{eq:first term}
\end{equation}
where $\hat{x}_{t} = \frac{x_{t}-\sqrt{1-\sigma(t)} s_{\theta}(x_{t})}{\sqrt{\sigma(t)}}$ is the estimated image and $y^{l} = x^{adv}$.

\textbf{Short-range guidance}.
Based on the definition of short-range distance, in small time steps, e.g., the later phase of the reverse process, the unknown relation will eventually lead to the deviation of trajectory. To alleviate it, we choose the super-resolution operation in the short-range distance. The super-resolution operation is a non-linear mapping operation. Under the transform operation of super-resolution, the non-linear transform could increase the Manhattan distance to change the short-range distance to the long-range distance. In this condition, based on the Eq.\ref{eq:change}, we eliminate the perturbation term and let the trajectory of the diffusion model go back to that with the clear input and achieve the mimicking.

Therefore, in the short-range guidance term, we use the super-resolution operation~\cite{DPS} to map the adversarial sample to the guidance sample $y^{s} = H(x^{adv})$. The estimated image $\hat{x}_{t}$ is also performed via the super-resolution operation, which is defined as:
\begin{equation}
\begin{split}
    \nabla_{x_t} &\log p(y^{s}|x_t) = \\
    &-R_t\nabla_{x_{t}} ||H(\hat{x_{t}}) - H(x^{adv})||_1,
\end{split}
    \label{eq:Second term}
\end{equation}
where $H(\ast)$ is the super-resolution (x4) operator~\cite{DPS} and is a non-linear transform to project images of low resolution onto high resolution, which could be calculated based on the Bicubic interpolation~\cite{DPS}. The bias leads by Bicubic interpolation will increase the Manhattan distance, and thus achieve a change from the short-range distance to the long-range distance.

To apply two guidance, the guidance term in Eq.~\ref{eq:guidedmethoddefination} could be re-defined as:
\begin{equation}
\begin{split}
     \nabla_{x_t} \log p(y^{l}|x_t) + \nabla_{x_t} \log p(y^{s}|x_t),
\end{split}
    \label{eq:multiguidenceterm}
\end{equation}
where the two guidance are independent since the long-range and short-range guidance are two independent cases. Eq.~\ref{eq:multiguidenceterm} could be calculated directly based on Eq.~\ref{eq:first term} and Eq.~\ref{eq:Second term}.

Following the advice of DPS~\cite{DPS}, the guided factor is:
\begin{equation} 
    R_{t} = \frac{1}{\sigma(t)^2}.
    \label{eq:factor}
\end{equation}
Note that the proposed method tries to mimic the trajectory of the diffusion model with $x^{ori}$ input and there is no serious setting for MimicDiffusion.

\textbf{Sampling strategy}. It is difficult to let the entire reverse process be the long-range distance. Besides, calculating the gradient has a high computation cost. To alleviate these, we choose to implement the guided method in the middle phase of the reverse process of all time steps. Concretely, following the empirical finding~\cite{yu2023freedom}, the whole generation time step is in $[T::0]$. We choose the middle phase of the reverse process, i.e., $[s::e]$, to implement the guided method and vice versa not implementing the guided method, where $s = 50\%T, e=20\%T$. In this way, we avoid implementing the guided method in small time steps to avoid adding extra perturbation, reducing the time cost at the same time, and thus try to avoid implementing the guidance method on the short-range distance part.

Algorithm 1 summarizes the proposed MimicDiffusion. In our method, the hyperparameters include the $R_{t}$ and the interval $[s,e]$. For the guided factor, it could be calculated directly without additional constraints. Due to the independent property of the proposed multi-guidance, we directly use the same guided factor for both guidance terms. In the end, Yu \textit{et al.}~\cite{yu2023freedom} showed that minor numerical fluctuation under the middle phase of the reverse process for $s$ and $e$ will have little influence on the performance of the guided method. Therefore, we successfully achieve the adversarial purification without the serious setting.

\begin{algorithm}[t]
    \caption{The overall algorithm for MimicDiffusion} \label{al:stage1}
    \begin{algorithmic}[1]
     \Statex \textbf{Input:} $x^{adv}$, $T$ \Comment{Pre-trained diffusion model}
     \Statex \textbf{Output:} $x_{0}$
     
    \State $s = 50\%T, e=20\%T$ \Comment{Initialize sampling strategy}
    \State $x_{T}\sim\mathcal{N}(0,1)$
    \For{$t$ in $[T,T-1,...,1]$} 
        \State Calculate $\hat{x}_{t}$ and $x_{t-1}$ by the Reverse process
        \If{$t\in[s,e]$}
            \State Calculate $R_{t}$ by Eq.~\ref{eq:factor}
            \State $g^l\gets  \nabla_{x_t} \log p(y^l|x_t)$ by Eq.~\ref{eq:first term}
            \State $g^s\gets  \nabla_{x_t} \log p(y^s|x_t)$ by Eq.~\ref{eq:Second term}
            \State $x_{t-1} \gets x_{t-1} + g^l + g^s$ 
            \Else
                \State $x_{t-1} \gets x_{t-1}$
        \EndIf
    \EndFor \label{ref:end}
    \Statex\textbf{Return:} $x_{0}$
    \end{algorithmic}
\end{algorithm}

\section{Experiment}
\subsection{Experimental Settings}
\textbf{Datasets and network architectures}. We consider three datasets for evaluation: CIFAR-10, CIFAR-100~\cite{datasets:Cifar}, and ImageNet~\cite{ImageNet}. Meanwhile, we compare various state-of-the-art defense methods reported by the standardized benchmark: RobustBench \cite{robustbench} on CIFAR-10 and CIFAR-100 while comparing other adversarial purification methods on CIFAR-10. For classifiers, we consider two widely used backbones on RobustBench: WideResNet-28-10 and WideResNet-70-16 \cite{WideResNet}. For ImageNet, we consider the ResNet50 as the backbone. 

\textbf{Adversarial attack methods}. We evaluate our method with the common adversarial attack method: the strong adversarial attack method AutoAttack \cite{autoattack} with two settings: AutoAttack($\ell_{\infty},\epsilon=8/255$) and AutoAttack($\ell_{2},\epsilon=0.5$) respectively, projected gradient descent (PGD) attack($\ell_{\infty},\epsilon=8/255$)~\cite{PGDattack}, and C\&W attack~\cite{CWattack}. Meanwhile, to make a fair comparison with other adversarial purification methods, we evaluate our method with the adaptive attack: Backward pass differentiable approximation (BPDA+EOT)~\cite{bpda}. The experimental results for the C\&W attack and the PGD attack are reported in the supplementary material.

\textbf{Pre-trained diffusion model}. We use the unconditional CIFAR-10 checkpoint of EDM offered by NVIDIA \cite{Karras2022edm} for our method on CIFAR-10 datasets. We fine-tune the unconditional CIFAR-10 checkpoint based on CIFAR-100 for our method following the training method offered by NVIDIA \cite{Karras2022edm}. For ImageNet, we use the pre-trained diffusion model offered by Nie \textit{et al.}~\cite{adpuri:baseline3}. We evaluate our model on a single RTX4090 GPU with 24 GB memory. For CIFAR-10 and CIFAR-100, $T=100$. For ImageNet, $T=1000$.
\begin{table*}[htbp]
    \centering
    \caption{Standard accuracy and robust accuracy against AutoAttack $\ell_{\infty} (\epsilon=8/255)$ on CIFAR-10 (*methods use extra data)}
    \begin{tabular}{c c c c}
        \toprule
        \textbf{Method} & \textbf{Backbone} &\textbf{Standard Accuracy(\%)} & \textbf{Robust Accuracy(\%)} \\
        \midrule
        Zhang \textit{et al.}~\cite{baseline:zhang2020}* & WideResNet-28-10 & 89.36 & 59.96 \\
         Wu \textit{et al.}~\cite{basline:wu2020}* & WideResNet-28-10 & 88.25 & 62.11 \\
         Gowal \textit{et al.}~\cite{baseline:Gowal2020}*  & WideResNet-28-10 & 89.48 & 62.70 \\
         Wu \textit{et al.}~\cite{basline:wu2020} & WideResNet-28-10 & 85.36 & 59.18 \\
         Gowal \textit{et al.}~\cite{basline:Gowal2021} & WideResNet-28-10 & 87.33 & 61.72\\
         Rebuffi \textit{et al.}~\cite{baseline:rebuffi2021} & WideResNet-28-10 & 87.50 & 65.24 \\
        GDPM~\cite{GDPM} & WideResNet-28-10 & 84.85 & 71.18 \\
        Nie \textit{et al.}~\cite{adpuri:baseline3} & WideResNet-28-10 & 89.23 & 71.03 \\
        MimicDiffusion (Our) & WideResNet-28-10 &  \textbf{93.32} $\pm$ \textbf{2.94} & \textbf{ 92.67} $\pm$ \textbf{3.15}\\
        \midrule
        Gowal \textit{et al.}~\cite{baseline:Gowal2020}* & WideResNet-70-16  & 91.10 &  66.02 \\
        Rebuffi \textit{et al.}~\cite{baseline:rebuffi2021}* & WideResNet-70-16 & 92.23 & 68.56 \\
        Gowal \textit{et al.}~\cite{baseline:Gowal2020} & WideResNet-70-16 & 85.29 & 59.57 \\
        Rebuffi \textit{et al.}~\cite{baseline:rebuffi2021} & WideResNet-70-16 & 88.54 & 64.46 \\
        Gowal \textit{et al.}~\cite{basline:Gowal2021}& WideResNet-70-16 & 88.74 &  66.60 \\
        Nie \textit{et al.}~\cite{adpuri:baseline3} & WideResNet-70-16 & 91.04 & 71.84 \\
        MimicDiffusion (Our) & WideResNet-70-16 &\textbf{93.63} $\pm$ \textbf{2.67} &  \textbf{92.53} $\pm$ \textbf{3.06}\\
        \bottomrule
    \end{tabular}
    \label{tab:cifa10_autoattack_linf_experiment}
\end{table*}
\textbf{Evaluation metrics}. We use \textit{standard accuracy} and \textit{robust accuracy} as the evaluation metrics following the prior works \cite{adpuri:baseline3}. Meanwhile, following the experimental setting \cite{adpuri:baseline3} to reduce the computation cost of applying adaptive attacks, we evaluate the robust accuracy for all methods with BPDA+EOT attack on a fixed subset of 512 images that are randomly sampled from the test set. Meanwhile, the visualization for the purified images is reported in the supplementary material.

\subsection{Experimental Results}
We first report the results of MimicDiffusion compared with the state-of-the-art adversarial training method reported by RobustBench \cite{robustbench} against the $\ell_{\infty}$ and $\ell_{2}$ threat models, respectively.

\textbf{CIFAR-10.} Table.~\ref{tab:cifa10_autoattack_linf_experiment} shows the robustness performance against the AutoAttack with $\ell_{\infty}$ and $\epsilon=8/255$. Specifically, MimicDiffusion improves the average robust accuracy by 21.64\% on WideResNet-28-10 and by 20.69\% on WideResNet-70-16 respectively compared with the best baseline method. Meanwhile, compared with the adversarial training methods that need extra data, MimicDiffusion improves the average robust accuracy by 29.97\% on WideResNet-28-10 and by 23.97\% on WideResNet-70-16 respectively. It should be noted that our method effectively narrows the gap between standard accuracy and robust accuracy, showcasing the efficacy of mimicking the diffusion model using clean images, where the gap between standard accuracy and robust accuracy is 0.65\% and 1.1\% on WideResNet-28-10 and WideResNet-70-16 respectively.

\begin{table*}[htbp]
    \centering
    \caption{Standard accuracy and robust accuracy against AutoAttack $\ell_{2} (\epsilon=0.5)$ on CIFAR-10 (*methods use extra data)}
    \begin{tabular}{c c c c}
        \toprule
        \textbf{Method} & \textbf{Classifier} &\textbf{Standard Accuracy(\%)} & \textbf{Robust Accuracy(\%)} \\
        \midrule
        Augustin \textit{et al.}~\cite{baseline:Augustin2020}* & WideResNet-28-10 & 92.23 & 77.93 \\
        Rony \textit{et al.}~\cite{baseline:Rony2019} & WideResNet-28-10 & 89.05 & 66.41 \\
        Ding \textit{et al.}~\cite{baseline:Ding2020} & WideResNet-28-10 & 88.02 &  67.77 \\
        Wu \textit{et al.}~\cite{basline:wu2020}* & WideResNet-28-10 & 88.51 & 72.85 \\
        Sehwag \textit{et al.}~\cite{Sehwag:2021}*  & WideResNet-28-10 & 90.31 & 75.39\\
        Rebuffi \textit{et al.}~\cite{baseline:rebuffi2021} & WideResNet-28-10 & 91.79 & 78.32 \\
        GDPM & WideResNet-28-10 & 92.00 & 75.28 \\
        Nie \textit{et al.}~\cite{adpuri:baseline3} & WideResNet-28-10 & 91.38 & 78.98 \\
        MimicDiffusion (Our) & WideResNet-28-10 & \textbf{ 93.66} $\pm$ \textbf{3.22} &  \textbf{92.26} $\pm$ \textbf{3.40} \\
        \midrule
        Gowal \textit{et al.}~\cite{baseline:Gowal2020}* & WideResNet-70-16 & 94.74 &  79.88\\
        Rebuffi \textit{et al.}~\cite{baseline:rebuffi2021}* & WideResNet-70-16 & 95.74 & 81.44\\
        Gowal \textit{et al.}~\cite{baseline:Gowal2020} & WideResNet-70-16 & 90.90 & 74.03\\
        Rebuffi \textit{et al.}~\cite{baseline:rebuffi2021} & WideResNet-70-16 & 92.41 & 80.86\\
        Nie \textit{et al.}~\cite{adpuri:baseline3} & WideResNet-70-16 & \textbf{93.24} & 81.17\\
        MimicDiffusion (Our) & WideResNet-70-16 & 93.49 $\pm$ 2.77 & \textbf{92.26} $\pm$ \textbf{2.78} \\ 
        \bottomrule
    \end{tabular}
    \label{tab:cifa10_autoattack_l2_experiment}
\end{table*}

\begin{table*}[htbp]
    \centering
    \caption{Standard accuracy and robust accuracy against AutoAttack $\ell_{\infty} (\epsilon=8/255)$ on CIFAR-100 (*methods use extra data)}
    \begin{tabular}{c c c c}
        \toprule
        \textbf{Method} & \textbf{Classifier} &\textbf{Standard Accuracy(\%)} & \textbf{Robust Accuracy(\%)} \\
        \midrule
         Debenedetti \textit{et al.}~\cite{baseline:Debenedetti2022}* & XCiT-M12 & 69.21 & 34.21  \\
         Debenedetti \textit{et al.}~\cite{baseline:Debenedetti2022}* & XCiT-L12 & 70.76 & 35.08 \\
        Gowal \textit{et al.}~\cite{baseline:Gowal2020}* & WideResNet-70-16 & 69.15 & 36.88 \\
        \midrule
         Pang \textit{et al.}~\cite{baseline:pang2022} & WideResNet-28-10 & 63.66 & 31.08 \\
        Rebuffi \textit{et al.}~\cite{baseline:rebuffi2021} & WideResNet-28-10 & 62.41 & 32.06  \\
        Wang \textit{et al.}~\cite{baseline:wang2023} & WideResNet-28-10 & 72.58 & 38.83 \\
        \midrule
        Pang \textit{et al.}~\cite{baseline:pang2022} & WideResNet-70-16 & 65.56 & 33.05 \\
        Rebuffi \textit{et al.}~\cite{baseline:rebuffi2021} & WideResNet-70-16 & 63.56 & 34.64 \\
        Wang \textit{et al.}~\cite{baseline:wang2023} & WideResNet-70-16 & \textbf{75.22} & 42.67 \\
        \midrule
        MimicDiffusion (Our) & WideResNet-28-10 & 63.53 $\pm$ 6.17 & \textbf{61.35} $\pm$ \textbf{5.45} \\  
        MimicDiffusion (Our) & WideResNet-70-16 & 64.32 $\pm$ 5.77 & \textbf{62.26} $\pm$ \textbf{4.31} \\ 
        \bottomrule
    \end{tabular}
    \label{tab:cifa100_autoattack_linf_experiment}
\end{table*}

Table.~\ref{tab:cifa10_autoattack_l2_experiment} shows the robustness performance against the AutoAttack with $\ell_{2}$ and $\epsilon=0.5$. Specifically, MimicDiffusion improves the average robust accuracy by 13.28\% on WideResNet-28-10 and by 11.09\% on WideResNet-70-16 respectively compared with the best baseline method. Meanwhile, MimicDiffusion outperforms the method with the extra data. Except for the improvement in the robust accuracy, the gap between the standard accuracy and robust accuracy is still reduced. These results confirm that MimicDiffusion is effective in improving accuracy against $\ell_{2}$ threat.
\begin{table*}[htbp]
    \centering
    \caption{Standard accuracy and robust accuracy against BPDA+EOT ($\ell_{\infty},\epsilon=8/255$) on WideResNet-28-10 for CIFAR-10 }
    \begin{tabular}{c c c c}
        \toprule
        \textbf{Method}& \textbf{Purification}   &\textbf{Standard Accuracy(\%)} & \textbf{Robust Accuracy(\%)} \\
        \midrule
        Song \textit{et al.}~\cite{baseline:song2018} & Gibbs Update & 95.00 & 9.00 \\
        Yang \textit{et al.}~\cite{baseline:yang2019}* &  Mask+Recon. & 94.00 & 15.00  \\
        Hill \textit{et al.}~\cite{baseline:hill2021}*  & EBM+LD & 70.76 & 35.08 \\
        Yong \textit{et al.}~\cite{baseline:Yoon2021}*  & DSM+LD & 86.14 &  70.01 \\
        \midrule
        Nie \textit{et al.}~\cite{adpuri:baseline3}($t^{\ast} = 0.0075$) & Diffusion & \textbf{91.38} & 77.62 \\
        NIe \textit{et al.}~\cite{adpuri:baseline3}($t^{\ast} = 0.1$) & Diffusion & 89.23 & 81.56 \\
        GDPM~\cite{GDPM} & Diffusion & 90.36 & 77.31\\
        MimicDiffusion (Our) & Diffusion & 92.5 $\pm$ 5.12 & \textbf{92.00} $\pm$ \textbf{6.1} \\
        \bottomrule
    \end{tabular}
    \label{tab:cifa10_bpda_experiment}
\end{table*}
To sum up, the experimental results on CIFAR-10 show the effectiveness of MimicDiffusion in defending against $\ell_{\infty}$ and $\ell_{2}$ threat models on CIFAR-10. Meanwhile, MimicDiffusion keeps a high level of standard accuracy, which demonstrates the validity of MimicDiffusion.

\textbf{CIFAR-100}. Table. \ref{tab:cifa100_autoattack_linf_experiment} shows the robustness performance against the AutoAttack with $\ell_{\infty}$ and $\epsilon=8/255$. It can be found that MimicDiffusion improves the average robust accuracy by 18.68\% on WideResNet-28-10 and by 19.59\% on WideResNet-70-16 respectively. Even with the minimum level of MimicDiffusion, we still improve the average robust accuracy by 13.23\% on WideResNet-28-10 and by 15.28\% on WideResNet-70-16. Meanwhile, based on the different datasets, our method tends to reduce the gap between standard accuracy and robust accuracy. These results prove that our model achieves the mimicking and significantly outperforms other baseline models on the CIFAR-100 dataset.

\textbf{ImageNet}. We report the extra experimental results on ImageNet~\cite{ImageNet} shown in the Supplementary. These results follow the experimental setting in Nie \textit{et al.} \cite{adpuri:baseline3}. According to the results, we still see a significant improvement, with the average robust accuracy increasing by 17.64\%. Meanwhile, MimicDiffusion successfully reduces the gap between standard accuracy and robust accuracy. 

Overall, experimental results demonstrate that MimicDiffusion achieves a large improvement in defending the $\ell_{\infty}$ and $\ell_{2}$ threat model on CIFAR-10 and defending the $\ell_{\infty}$ threat model on CIFAR-100 and ImageNet. This also demonstrates the effectiveness of the proposed method for adversarial purification.

\textbf{Defense against unseen threats}. To demonstrate the effectiveness of MimicDiffusion, we compare it with other adversarial purification methods using BPDA+EOT ($\ell_{\infty},\epsilon=8/255$)~\cite{bpda}. This attack, which is adaptive and stochastic, is specifically designed for purification methods, as some adversarial purification methods are not compatible with AutoAttack. The experimental results are shown in Table. \ref{tab:cifa10_bpda_experiment}. It can be found that MimicDiffusion also gets the best performance in terms of robust accuracy, which improves the average robust accuracy by almost 4.35\% in the worst performance. Meanwhile, we find that there is less gap between standard accuracy and robust accuracy for MimicDiffusion, which demonstrates the success of MimicDiffusion.
\subsection{Ablation Study}

As shown in Table.~\ref{tab:alabation study}, we list the different combinations among MimicDiffusion. To prove the necessity for two guidance, it can be found that using each of the guidance individually results in a significant decrease in robust accuracy. Then, Compared with the $\ell_{2}$ norm, widely used in previous methods, $\ell_{1}$ norm could increase $31.93\%$ robust accuracy and achieve a large improvement, which proves the effectiveness of Lemma~\ref{lemma1}. Meanwhile, the super-resolution guidance should change the short-range to the long-range. This means that using the $g^s$ in addition to $g^l$ significantly improves the robust accuracy by almost 66.32\%. The sampling strategy improves robust accuracy by 3.5\% without impacting standard accuracy, and it performs well. We also reported the time cost of using the sampling strategy in the supplementary material.
\begin{table}[htbp]
    \centering
    \begin{tabular}{c c c c | c c}
    \toprule
    \textbf{$g^l$} & \textbf{$g^s$} & \textbf{$d$} & \textbf{Sampling} &\textbf{Standard(\%)} & \textbf{Robust(\%)} \\
    \midrule
     $\surd$ & & $\ell_{2}$ &  & 86.42 & 16.7  \\
     & $\surd$ & $\ell_{2}$&  & 10.89 & 10.00 \\
     $\surd$ & & $\ell_{1}$&  & 88.39 & 22.78  \\
     & $\surd$ & $\ell_{1}$&  & 10.80 & 7.10  \\
    $\surd$ & $\surd$ & $\ell_{2}$&  & 91.03 & 57.07  \\
    $\surd$ & $\surd$ & $\ell_{1}$&  & 91.30 & 89.10  \\
    $\surd$ & $\surd$ & $\ell_{1}$& $\surd$ & 93.30 & 92.60  \\
    \bottomrule
    \end{tabular}
    \caption{Ablation study for MimicDiffusion based on CIFAR-10 against AutoAttack($\ell_{\infty},\epsilon=8/255$) with WideResNet-28-10, where Sampling is sampling strategy, standard is standard accuracy, and robust is robust accuracy.}
    \label{tab:alabation study}
\end{table}
\section{Conclusion}
We proposed a new defense method called MimicDiffusion to achieve adversarial purification by mimicking the trajectory of the diffusion model using clean images as the input without serious settings. Specifically, using the two proposed guidance methods with the Manhattan distance can mitigate the negative impact caused by adversarial perturbation, as mentioned in Lemma~\ref{lemma1}. Meanwhile, to avoid introducing extra perturbation, we proposed a novel sampling strategy based on the property of the guided method, which can also reduce the time cost. To show the robust performance of our method, we conducted thorough experiments on CIFAR-10, CIFAR-100, and ImageNet. These experiments involved using various classifier backbones, such as WideResNet-70-16, WideResNet-28-10, and Resnet50, to compare our method with state-of-the-art adversarial training and adversarial purification methods. The experimental results showed that our method got the best performance in defense of various strong adaptive attacks such as AutoAttack, PGD attack, C\&W attack, and BPDA+EOT. These results show that MimicDiffusion could mimic the trajectory of the diffusion model using the clean image as the input.

Despite the large improvement, the proposed two guidance require calculating the gradients and will increase the computation cost. We will explore finding a gradient-free guided method in further work. Furthermore, in addition to focusing on small steps, we will also explore alternative distance metrics to mitigate the impact of adversarial perturbations.

\bibliographystyle{unsrt}  
\bibliography{main}  
\clearpage
\setcounter{page}{1}
\section{Appendix}

\begin{table}[htbp]
    \centering
    \begin{tabular}{c|c}
    \toprule
    \textbf{Sampling} & \textbf{Time(s)}\\
    \midrule
          & 268 \\
      $\surd$ & 156 \\
    \bottomrule
    \end{tabular}
    \caption{Alabation studying of time cost for purifying one image in ImageNet, where sampling is the sampling strategy.}
    \label{tab:time cost}
\end{table}
\begin{table*}[htbp]
    \centering
    \caption{Standard accuracy and robust accuracy against AutoAttack $\ell_{\infty} (\epsilon=8/255)$ on ImageNet}
    \begin{tabular}{c c c c}
        \toprule
        \textbf{Method} & \textbf{Classifier} &\textbf{Standard Accuracy(\%)} & \textbf{Robust Accuracy(\%)} \\
        \midrule
        Wang \textit{et al.}~\cite{robustbench} & ResNet50 & 62.56 & 31.06  \\
        Wong \textit{et al.}~\cite{baseline:Wong2020} & ResNet50 & 55.62 & 26.95 \\
        Salman \textit{et al.}~\cite{baseline:Salman2020} & ResNet50 & 64.02 &  37.89 \\
        Bai \textit{et al.}~\cite{baseline:bai2021} & ResNet50 & 67.38 & 35.51 \\
        Nie \textit{et al.} & ResNet50 & \textbf{68.22} & 43.89  \\
        MimicDiffusion (Our) & ResNet50 & 66.92 $\pm$ 10.44 &  \textbf{61.53} $\pm$ \textbf{9.7}\\
        \bottomrule
    \end{tabular}
    \label{tab:imagenet_autoattack_linf_experiment}
\end{table*}
\begin{table*}[htbp]
    \centering
    \caption{Standard accuracy and robust accuracy against PGD $\ell_{\infty} (\epsilon=8/255)$ on CIFAR-10}
    \begin{tabular}{c c c c}
        \toprule
        \textbf{Method} & \textbf{Classifier} &\textbf{Standard Accuracy(\%)} & \textbf{Robust Accuracy(\%)} \\
        \midrule
        Nie \textit{et al.} & WideResNet-70-16 &  91.03 & 57.69  \\
        Wang \textit{et al.}~\cite{RobustEvaluation}& WideResNet-70-16 & 90.67 & 63.52 \\
        MimicDiffusion (Our) & WideResNet-70-16 & \textbf{92.05} $\pm$ 6.02 &  \textbf{91.55} $\pm$ 6.84\\
        \midrule
        GDPM & WideResNet-28-10& 93.50 & 90.10 \\
        Nie \textit{et al.} & WideResNet-28-10 & 91.00 & 54.92  \\
        Wang \textit{et al.}~\cite{RobustEvaluation}& WideResNet-28-10 & 90.70 &62.15 \\
        MimicDiffusion (Our) & WideResNet-28-10 & \textbf{91.93} $\pm$ 6.00 &  \textbf{91.88} $\pm$ 6.01 \\
        \bottomrule
    \end{tabular}
    \label{tab:pgd}
\end{table*}
\begin{table*}[htbp]
    \centering
    \caption{Standard accuracy and robust accuracy against C\&W Attack  $\ell_{2} (\epsilon=8/255), \text{EOT}=50$ on CIFAR-10}
    \begin{tabular}{c c c c}
        \toprule
        \textbf{Method} & \textbf{Classifier} &\textbf{Standard Accuracy(\%)} & \textbf{Robust Accuracy(\%)} \\
        \midrule
        Nie \textit{et al.} & WideResNet-70-16 & \textbf{92.35} & 47.00  \\
        MimicDiffusion (Our) & WideResNet-70-16 & 91.85 $\pm$ 6.46 &  \textbf{91.67} $\pm$ \textbf{6.49}\\
        \midrule
        GDPM & WideResNet-28-10&  21.30 & 21.71\\
        Nie \textit{et al.} & WideResNet-28-10 & \textbf{93.53} & 47.65  \\
        MimicDiffusion (Our) & WideResNet-28-10 & 90.34 $\pm$ 6.2 &  \textbf{89.91} $\pm$ \textbf{6.5}\\
        \bottomrule
    \end{tabular}
    \label{tab:cw1}
\end{table*}
\section{Extra Experiment}

\subsection{ImageNet}
To further prove the validity of MimicDiffusion, we report the extra experimental results on ImageNet~\cite{ImageNet} shown in Table. \ref{tab:imagenet_autoattack_linf_experiment} following the experimental setting in Nie \textit{et al.} \cite{adpuri:baseline3}. According to the results, we still get a large improvement, which improves the average robust accuracy by 17.64\%. Meanwhile, MimicDiffusion successfully reduces the gap between standard accuracy and robust accuracy. 
\subsection{Additional Attack Method}
We report the additional experimental results on PGD and C\&W attack based on CIFAR-10 compared with the latest works for adversarial purification including GDPM, \cite{adpuri:baseline3}, and \cite{RobustEvaluation}. To make sure of a fair comparison, we test our method following the advice of~\cite{RobustEvaluation}, and the results are shown in Table.~\ref{tab:pgd} and Table.~\ref{tab:cw1}. It can be noticed that our method achieves the best performance under average robust accuracy against PGD and C\&W attacks. Concretely, when against the C\&W attack, we improved almost 35.76\% average robust accuracy in the worst condition. When against the PGD attack, we improved almost 1\% average robust accuracy. Meanwhile, MimicDiffusion significantly reduces the gap between standard accuracy and robust accuracy. In this way, we prove that the proposed MimicDiffusion could reduce the negative influence of adversarial perturbation and avoid adding too much extra noise. Meanwhile, the performance of MimicDiffusion is stable against different attack methods based on the same setting.

\subsection{Visualization}
\begin{figure*}
    \centering
    \includegraphics{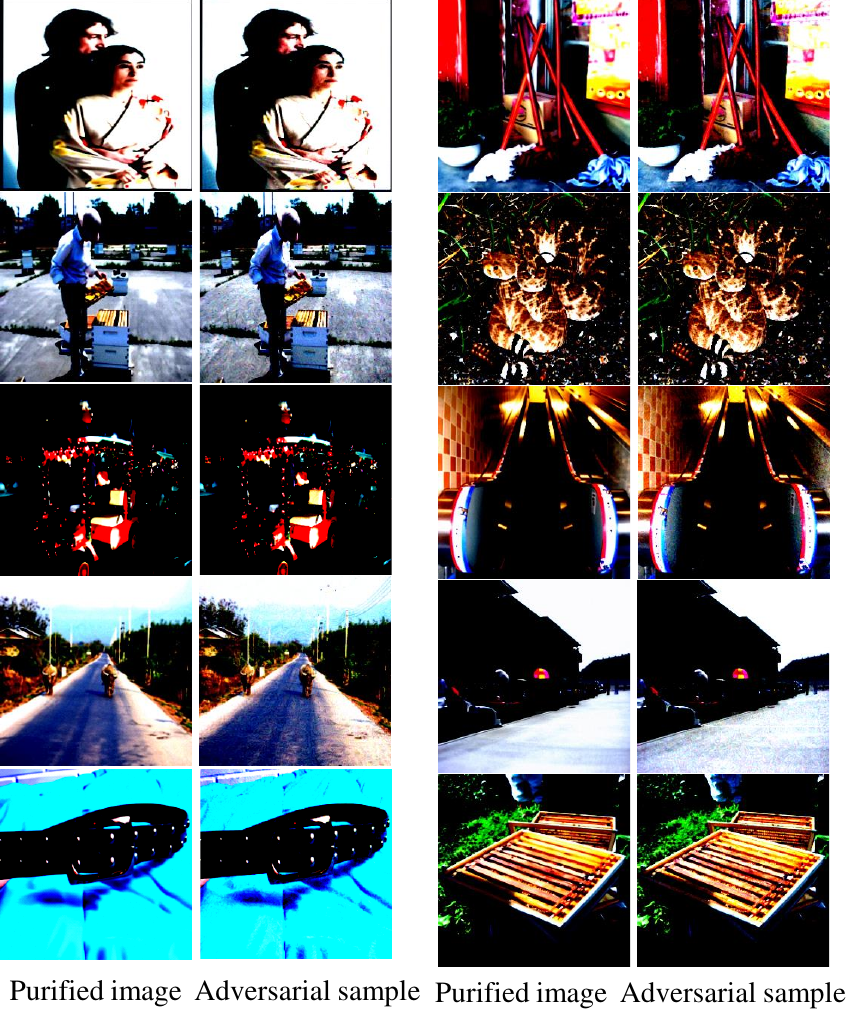}
    \caption{Visualization of MimicDiffusion against AutoAttack $\ell_{\infty} (\epsilon=8/255)$}
    \label{fig:purifed}
\end{figure*}
To show the ability of purification, we report the purified images shown in Fig.~\ref{fig:purifed}. It can be noticed that MimicDiffusion could successfully purify the adversarial perturbation and keep the label semantic as much as possible. 
\subsection{Time Cost}
To further prove the necessity of the sampling strategy, we make an ablation study for the time cost of using the sampling strategy shown in Table.~\ref{tab:time cost}. It can be noticed that the sampling strategy reduces half of the time cost compared with implementing the guided method in the entire reverse process.

\end{document}